\theoremstyle{plain}  % to use together with amsthm package
\newtheorem{theorem}{Theorem}
\newtheorem{lemma}[theorem]{Lemma}
\newtheorem{definition}[theorem]{Definition}
\algnewcommand\algorithmicinput{\textbf{Input:}}
\algnewcommand\algorithmicoutput{\textbf{Output:}}
\algnewcommand\Input{\item[\algorithmicinput]}%
\algnewcommand\Output{\item[\algorithmicoutput]}%
\DeclareMathOperator*{\argmin}{arg\,min}
\DeclareMathOperator*{\argmax}{arg\,max}
\newcommand{\norm}[1]{\left\lVert#1\right\rVert}
\newcommand{\remove}[1]{}
\title{\LARGE \bf
Behavior Mixing with Minimum Global and Subgroup Connectivity Maintenance for Large-Scale Multi-Robot Systems
}
\author{Wenhao Luo, Sha Yi, and Katia Sycara% <-this % stops a space
\thanks{This work was funded by the DARPA Cooperative Agreement No.: HR00111920012, AFOSR awards FA9550-18-1-0097, FA9550-18-1-0251 and FA9550-15-1-0442.}% <-this % stops a space
\thanks{The authors are with the Robotics Institute, Carnegie Mellon University, Pittsburgh, PA 15213, USA. Email: {\tt\small \{wenhao, shayi, katia\}@cs.cmu.edu}.}%
}
\begin{document}

\maketitle
\thispagestyle{empty}
\pagestyle{empty}

%%%%%%%%%%%%%%%%%%%%%%%%%%%%%%%%%%%%%%%%%%%%%%%%%%%%%%%%%%%%%%%%%%%%%%%%%%%%%%%%
\begin{abstract}

In many cases the multi-robot systems are desired to execute simultaneously multiple behaviors with different controllers, and sequences of behaviors in real time, which we call \textit{behavior mixing}. Behavior mixing is accomplished when different subgroups of the overall robot team change their controllers to collectively achieve given tasks while maintaining connectivity within and across subgroups in one connected communication graph. In this paper, we present a provably minimum connectivity maintenance framework to ensure the subgroups and overall robot team stay connected at all times while providing the highest freedom for behavior mixing. In particular, we propose a real-time distributed Minimum Connectivity Constraint Spanning Tree (MCCST) algorithm to select the minimum inter-robot connectivity constraints preserving subgroup and global connectivity that are \textit{least likely to be violated} by the original controllers. With the employed safety and connectivity barrier certificates for the activated connectivity constraints and collision avoidance, the behavior mixing controllers are thus minimally modified from the original controllers. We demonstrate the effectiveness and scalability of our approach via simulations of up to 100 robots with multiple behaviors.

\end{abstract}

%%%%%%%%%%%%%%%%%%%%%%%%%%%%%%%%%%%%%%%%%%%%%%%%%%%%%%%%%%%%%%%%%%%%%%%%%%%%%%%%
\section{Introduction}

The ability of collaboration in multi-robot systems often relies on the local information sharing and interaction among networked robot members through a connected communication graph, e.g. flocking \cite{olfati2006flocking, zavlanos2007flocking}, formation control \cite{li2018formally} and leader selection \cite{luo2015asynchronous, luo2016distributed}. As robots are often assumed to interact in a proximity-limited manner due to limited communication range, it is necessary to consider connectivity maintenance that ensures robots stay connected as one component. This is often referred as maintaining \textit{global connectivity} \cite{sabattini2013decentralized, giordano2011bilateral,yang2010decentralized, williams2015global, tateo2018multiagent,ijcai2019-21, khateri2019comparison, luo2019voronoi, luo2019minimum} and achieved by constraining inter-robot distance while executing original tasks.
In many situations, however,
it may be more appropriate and efficient to have the 
multi-robot systems \textit{simultaneously} performing multiple behaviors in different subgroups while remaining connected. 
For example, having a robot team split into multiple operating subgroups 
%to form multiple different formations or 
flocking to multiple task areas at the same time. 
%to track different targets or monitor different task areas. 
As subgroups may be formed based on the particular combinations of robots with heterogeneous capabilities, when the robot team spreads out over multiple widely separated task areas, robots in the same subgroup for a designated task area are expected to stay \emph{locally connected by themselves} as one coherent component for efficient local collaboration. 
Global connectivity is still required to allow for global coordination among different subgroups, e.g. redistribution of robots due to dynamic task reallocation over time.

Thus it is also necessary to ensure connectivity within each subgroup and across subgroups as well as global connectivity.
We call this ability of multi-robot systems to accommodate different behaviors simultaneously within a single connected robot team while maintaining safety (collision avoidance with other robots and possibly obstacles) and within and across subgroup connectivity  \textit{Behavior Mixing}.

\textit{To the best of our knowledge}, there is no existing work on connectivity maintenance that can ensure both global  and subgroup connectivity for behavior mixing. 
Most of the connectivity control methods use either 1) \textit{local methods} that seek to preserve the initial connectivity graph topology over time \cite{dimarogonas2008decentralized, ji2007distributed, zavlanos2007flocking}, or 2) \textit{global methods} that aim to preserve the global algebraic connectivity of the communication graph by deriving secondary connectivity controllers for keeping the second smallest eigenvalue of the graph Lapacian positive at all times \cite{giordano2011bilateral, sabattini2013decentralized, yang2010decentralized, williams2015global}.
Recent work \cite{banfi2018optimal,chandarana2018decentralized, majcherczyk2018decentralized, guerrero2018design, li2017decentralized, aragues2014triggered}
have explored the idea of redeploying a certain number of robots to act as communication relays, while aiming to allow the rest of the robots to perform their original tasks. 
These approaches often has no guarantee that the perturbation from the connectivity controllers is minimum over the original robot task-related controllers. 
On the other hand, recent advances in permissive control barrier functions \cite{ames2019control} have been widely applied to multi-robot systems with guarantee on the forward invariance of desired sets, e.g. robots staying collision-free and connected with predefined communication graph at all times \cite{wang2016multi, li2018formally}, while minimizing revisions to the original robot controllers.
However, these capabilities are achieved by predefining the connectivity constraints so as to preserve \emph{fixed predefined} communication topology \cite{li2018formally, wang2016multi}. It would be desired for the robots to compute in real-time the \emph{optimal} communication topology/connectivity constraints to preserve that provides provably highest freedom for executing the original robot tasks with required network connectivity.

To that end, in this paper we develop a generalized behavior mixing framework with minimum global and subgroup connectivity maintenance. Such framework is based on a bilevel optimization process that 1) incorporates a novel distributed \textit{Minimum Connectivity Constraint Spanning Tree (MCCST)} to compute real-time \emph{minimum connectivity constraints}, and 2) minimizes the revision to the original controllers subject to our invoked connectivity constraints and collision avoidance constraints formulated by the barrier certificates with control barrier functions (CBF) \cite{wang2016multi, ames2019control}. In particular, MCCST computes the provably optimal set of communication links for the robots to maintain, which (a) has minimum number of links, and (b) invokes the connectivity constraints for global and subgroup connectivity least likely to be violated by the original controllers. 
Minimum connectivity maintenance is thus achieved by minimally modifying the original controllers to preserve these dynamic \emph{least constraining} communication links and avoid collisions.

Our paper presents the following contributions: (1) a generalized bilevel optimization based behavior mixing framework to enable \textit{simultaneous execution of different behaviors and sequences of behaviors within a single robot team,} while ensuring global and subgroup connectivity and collision avoidance;
(2) a novel distributed MCCST method with quantified relationship between original task-related controllers and connectivity constraints to efficiently select \textit{real-time minimum behavior mixing connectivity constraints with provably optimality guarantee},
(3) \textit{computationally efficient} construction of MCCST that is \textit{scalable} to  large number of robots and suitable for real-time computation to accommodate dynamic changes in the environment. 

\section{Behavior Mixing}
Consider a robotic team $\mathcal{S}$ consisting of $N$ mobile robots in a planar space, with the position and single integrator dynamics of each robot $i\in\{1,\ldots,N\}$ denoted by $x_i\in \mathbb{R}^2$ and $\dot{x}_i=u_i\in \mathbb{R}^2$ respectively. Each robot can connect and communicate directly with other robots within its spatial proximity. The communication graph of the robotic team is defined as $\mathcal{G} = (\mathcal{V},\mathcal{E})$ where each node $v \in \mathcal{V}$ represents a robot. If the spatial distance between robots $v_i,v_j \in \mathcal{V}$ is less or equal to the communication radius $R_c$ (i.e. $\norm{x_i-x_j}\leq R_c$), then we assume the two can communicate and edge $(v_i,v_j) \in \mathcal{E}$ is undirected (i.e. $(v_i,v_j) \in \mathcal{E} \Leftrightarrow (v_j,v_i) \in \mathcal{E}$).

\subsection{Safety and Connectivity Constraints using Barrier Certificates}
Consider the joint robot states $\mathbf{x}=\{x_1,\ldots,x_N\}\in \mathbb{R}^{2N}$ and define the minimum inter-robot safe distance as $R_s$, for any pair-wise inter-robot collision avoidance constraint between robots $i$ and $j$. We have the following condition defining the safe set of $\mathbf{x}$.
\begin{equation}\label{eq:safesetij}\footnotesize
    \begin{split}
h_{i,j}^s(\mathbf{x})&=\norm{x_i-x_j}^2-R_{s}^2,\qquad \forall i>j \\
\mathcal{H}_{i,j}^s&=\{\mathbf{x}\in \mathbb{R}^{2N}:h^s_{i,j}(\mathbf{x})\geq 0\}
    \end{split}
\end{equation}
The set of $\mathcal{H}_{i,j}^s$ indicates the safety set from which robot $i$ and $j$ will never collide. For the entire robotic team, the safety set can be composed as follows.
\begin{equation}\label{eq:safeset}\footnotesize
    \mathcal{H}^s = \bigcap_{\{v_i,v_j\in \mathcal{V}:i>j\}} \mathcal{H}_{i,j}^s
\end{equation}
\cite{wang2017safety} proposed the safety barrier certificates $\mathcal{B}^s(\mathbf{x})$ using control barrier functions $h^s_{i,j}(\cdot)$ that map the constrained safety set (\ref{eq:safeset}) of $\mathbf{x}$ to the admissible joint control space $\mathbf{u}\in \mathbb{R}^{2N}$ for ensuring $h^s_{i,j}(\cdot)\geq 0$ at all time. The result is summarized as follows.
\begin{equation}\label{eq:safebarrier}\footnotesize
\mathcal{B}^s(\mathbf{x})=\{\mathbf{u}\in \mathbb{R}^{2N}:\dot{h}^s_{i,j}(\mathbf{x})+\gamma h^s_{i,j}(\mathbf{x})\geq 0, \forall i>j\}
\end{equation}
where $\gamma$ is a user-defined parameter to confine the available sets. It is proven in \cite{wang2017safety} that the forward invariance of the safety set $\mathcal{H}^s$ is ensured as long as the joint control input $\mathbf{u}$ stays in set $\mathcal{B}^s(\mathbf{x})$. In other words, the robots will always stay safe if they are initially inter-robot collision free and the control input lies in the set $\mathcal{B}^s(\mathbf{x})$. 
The constrained control space in (\ref{eq:safebarrier}) corresponds to a class of linear constraints over pair-wise control inputs $u_i$ and $u_j$.
%for $\forall i>j$.

Likewise, if the connectivity constraint is enforced between  pair-wise robots $i$ and $j$ to ensure inter-robot distance not larger than communication range $R_c$, we have
\begin{equation}\label{eq:connsetij}\footnotesize
      \begin{split}
h_{i,j}^c(\mathbf{x})&=R_{c}^2-\norm{x_i-x_j}^2\\
\mathcal{H}_{i,j}^c&=\{\mathbf{x}\in \mathbb{R}^{2N}:h^c_{i,j}(\mathbf{x})\geq 0\}
    \end{split}
\end{equation}
The set of $\mathcal{H}^c_{i,j}$ indicates the feasible set on $\mathbf{x}$ from which robot $i$ and $j$ will never lose connectivity. Consider any connectivity spanning graph $\mathcal{G}^c=(\mathcal{V},\mathcal{E}^c)\subseteq\mathcal{G} $ to enforce, the corresponding constrained set can be composed as follows.
\begin{equation}\label{eq:connset}\footnotesize
    \mathcal{H}^c(\mathcal{G}^c) = \bigcap_{\{v_i,v_j\in \mathcal{V}:(v_i,v_j)\in \mathcal{E}^c\}} \mathcal{H}_{i,j}^c
\end{equation}
\vspace{-0.3cm}

Similar to the safety barrier certificates in (\ref{eq:safebarrier}), the connectivity barrier certificates \cite{wang2016multi} are defined as follows indicating another class of linear constraints over pair-wise control inputs $u_i$ and $u_j$ for $(v_i,v_j)\in \mathcal{E}^c$ at any time point $t$.
\begin{equation}\label{eq:connbarrier}\footnotesize
\mathcal{B}^c(\mathbf{x},\mathcal{G}^c)=\{\mathbf{u}\in \mathbb{R}^{2N}:\dot{h}^c_{i,j}(\mathbf{x})+\gamma h^c_{i,j}(\mathbf{x})\geq 0, \forall (v_i,v_j)\in \mathcal{E}^c\}
\end{equation}

\subsection{Bilevel Optimization for Behavior Mixing}

In behavior mixing, we assume the robotic team is tasked with $M$ simultaneous behaviors and has been partitioned into $M$ sub-groups $\mathcal{S}=\{\mathcal{S}_1,\ldots,\mathcal{S}_M\}$, with
each robot $i$ already assigned to a sub-group $\mathcal{S}_m$ and with original task-related controller $u_i=\hat{u}_i$.
To ensure successful behavior mixing, the global connectivity graph $\mathcal{G}$ and the induced subgroup connectivity graph $\mathcal{G}_m =\mathcal{G}[\mathcal{V}_m]\subseteq \mathcal{G}$ where $\mathcal{V}_m\subseteq \mathcal{V}$ containing robots within the same sub-group $\mathcal{S}_m$ for all $m=1,\ldots,M$ should be connected at all time. We assume these connectivity constraints are satisfied initially.
With the defined forms of safety and connectivity constraints in (\ref{eq:safebarrier}) and (\ref{eq:connbarrier}), we formally define the \emph{behavior mixing} problem as a bilevel optimization process at each time step as follows. 

\vspace{-0.5cm}
{\footnotesize
\begin{align}
 &\mathbf{u}^* = \argmin_{\mathcal{G}^c,\mathbf{u}} \sum_{i=1}^{N}\norm{u_i-\Hat{u}_i}^2 \label{eq:rawobj}\\
 \text{s.t.} &\quad \mathcal{G}^c=(\mathcal{V}^c,\mathcal{E}^c)\subseteq \mathcal{G}\quad \text{is connected}\nonumber\\
&\quad  \mathcal{G}_m=\mathcal{G}^c[\mathcal{V}_m]\quad \text{is connected}\quad \forall m=1,\ldots,M \label{eq:rawconn}\\
&\quad \mathbf{u}\in \mathcal{B}^s(\mathbf{x})\bigcap \mathcal{B}^c(\mathbf{x},\mathcal{G}^c),\quad \norm{u_i}\leq \alpha_i,\forall i=1,\ldots,N \label{eq:rawconst}
\end{align} }
\vspace{-0.6cm}

This bilevel optimization problem can be solved by two-steps: find 1) the optimal connectivity spanning graph $\mathcal{G}^{c*}\subseteq \mathcal{G}$ to preserve, and 2) the one-step control inputs $\mathbf{u}^*\in \mathbb{R}^{2N}$ bounded by maximum velocities $\{\alpha_i\}$ and minimally deviated from $\hat{u}_i$ subject to constraints in (\ref{eq:rawconst}) with $\mathcal{G}^{c}=\mathcal{G}^{c*}$.

\section{Behavior Mixing using Distributed Selection of Minimum Connectivity Constraints}
\subsection{Minimum Connectivity Constraint Spanning Tree (MCCST)}
First we consider the sub-problem of selecting optimal connectivity spanning graph $\mathcal{G}^{c*}\subseteq \mathcal{G}$ in Eq. (\ref{eq:rawobj}) that introduces minimum connectivity constraints. 
As each edge $(v_i,v_j)\in \mathcal{E}^c$ in a candidate graph $\mathcal{G}^c$ enforces one constraint between robot $i,j$ in (\ref{eq:connbarrier}), the graph $\mathcal{G}^{c*}$ whose edges define the minimum connectivity constraints must exist among the set of all spanning trees $\mathcal{T}$ of $\mathcal{G}$ that have the minimum number of edges (i.e. $N-1$) for $\mathcal{G}^{c*}$ to stay connected.

Hence, the problem boils down to find the optimal spanning tree $\mathcal{G}^{c*}=\mathcal{T}^{c*}\in \mathcal{T}$ of $\mathcal{G}$ whose edges invoke the minimum connectivity constraints in the form of (\ref{eq:connbarrier}) over the robots' controllers. To quantify the strength of connectivity constraint by an edge $(v_i,v_j)\in \mathcal{E}$, 
%in $\mathcal{T}^c=(\mathcal{V},\mathcal{E}^{T})$, 
we introduce the weight assignment defined as follows.
\begin{equation}\footnotesize
    w_{i,j} = \dot{h}^c_{i,j}(\mathbf{x},\Hat{u}_i,\Hat{u}_j)+\gamma h^c_{i,j}(\mathbf{x}), \forall (v_i,v_j)\in \mathcal{E}
\end{equation}
Compared to the connectivity constraint in (\ref{eq:connbarrier}), $w_{i,j}$ indicates the violation of the pair-wise connectivity constraint between the two robots under the original controllers $\hat{u}_i,\hat{u}_j$, with the higher value of $w_{i,j}$ the less violated the connectivity constraint is. This quantifies how likely the existing connectivity link is going to break if no revision made to the controller. It is desired to preserve those links with larger $w_{i,j}$ implying smaller revision needed for the controllers to keep the links connected. With that, each candidate spanning tree $\mathcal{T}^c\in \mathcal{G}$ is redefined as a weighted spanning tree $\mathcal{T}_w^c=(\mathcal{V},\mathcal{E}^T,\mathcal{W}^T)$ with $\mathcal{W}^T=\{-w_{i,j}\}$. Hence the optimal connectivity graph $\mathcal{G}^{c*}$ with constraints in (\ref{eq:rawconn}) can be obtained as follows.
\begin{equation}\footnotesize
\begin{split}
   &\mathcal{G}^{c*} = \argmax_{\mathcal{T}_w^c\in \mathcal{T}}\sum_{(v_i,v_j)\in \mathcal{E}^{T}}w_{i,j}= \argmin_{\mathcal{T}_w^c\in \mathcal{T}} \sum_{(v_i,v_j)\in \mathcal{E}^{T}}-w_{i,j}\\
 \text{s.t.} &\quad  \mathcal{T}_m=\mathcal{T}_w^c[\mathcal{V}_m]\quad \text{is connected}\quad \forall m=1,\ldots,M \label{eq:expobj}  
\end{split}
\end{equation}
The optimal solution of (\ref{eq:expobj}) is the Minimal Spanning Tree (MST) weighted by $\{-w_{i,j}\}$ and constrained by sub-group connectivity requirements. We propose to define another class of spanning trees as follows and relate its unconstrained MST to the solution of the constrained MST in (\ref{eq:expobj}).
\begin{definition}\label{def:ccst}
Given a connectivity graph $\mathcal{G}$
% weighted spanning tree $\mathcal{T}_w^c=(\mathcal{V},\mathcal{E}^T,\mathcal{W}^T)$ 
and for all edges $(v_i,v_j)\in \mathcal{E}$ on $\mathcal{G}$, redefine their weights by the following.
\begin{align}\footnotesize
\label{eq:neww}
w'_{i,j} = \left\{ \begin{gathered}
  \lambda\cdot w_{i,j}, \quad \text{if} \quad \text{$v_i$ and $v_j$ are in the same sub-group } \hfill \\
  w_{i,j}, \quad \text{if} \quad \text{$v_i$ and $v_j$ are in different sub-groups} \hfill \\
\end{gathered}  \right.
\end{align}
where $\lambda\in\{\lambda \gg 1: \lambda \cdot w_{i,j}\gg w_{i',j'},\forall v_i,v_i',v_j,v_j' \in \mathcal{V}\}$ is a unique user-defined constant for the entire graph $\mathcal
{G}$. The weight-modified graph is denoted as $\mathcal{G}'$. Then we call the redefined spanning tree $\mathcal{T}_w^{c'}=(\mathcal{V},\mathcal{E}^T,\mathcal{W}^{T'})$ as the Connectivity Constraint Spanning Tree (CCST).
\end{definition} 

The Definition \ref{def:ccst} introduces a new class of spanning trees (CCST) $\mathcal{T}_w^{c'}$ equivalent to the original spanning trees $\mathcal{T}_w^{c}$ with inflated weights over the edges connecting robots in the same sub-group. In particular, the designed parameter $\lambda$ in (\ref{eq:neww}) ensures that after inflation the new weights $-w'_{i,j}$ over edges connecting different subgroups are always larger than any edges within all the subgroups for $\mathcal{T}_w^{c'}$. As we will prove by the following Lemma \ref{lemma:subgroup} and Theorem \ref{theorem:mccst}, this guarantees that the computed MST $\mathcal{T}_w^{c'}$ becomes the solution of constrained MST $\mathcal{T}_w^c$ in (\ref{eq:expobj}), namely, the MST $\mathcal{T}_w^{c'}$ contains the MST of each subgroup as well, ensuring that the subgroups are also connected in an optimal way. We review some useful definitions in graph theory \cite{gallager1983distributed}:
\begin{itemize}
\item \textit{fragment}: a subtree of Minimum Spanning Tree;
\item \textit{outgoing edge}: a edge of a fragment if one adjacent node is in the fragment and the other is not.
\end{itemize}
The first definition describes that a connected set of nodes and edges of the MST is called a fragment. By this definition, a single node is also a fragment by itself. In the following discussion, we focus on \textit{minimum-weight outgoing edge (MWOE)}, which is the edge with minimum weight among all outgoing edge of a fragment. 

\begin{lemma} \label{lemma:frag}
Let $e_{min}$ be a minimum-weight
outgoing edge (MWOE) of a fragment. Connecting $e_{min}$ and its adjacent node in a different fragment yields another fragment in MST.
\end{lemma}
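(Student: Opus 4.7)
The plan is to prove this by a standard cut/exchange argument that mirrors the classical cut property of Minimum Spanning Trees (the statement is essentially that MWOE safety for Boruvka/GHS-style MST construction). I would start by fixing an MST $\mathcal{T}^*$ that contains the given fragment $F$ (such an MST exists by assumption that $F$ is a fragment of the MST). Let $e_{min}=(v_i,v_j)$ be the MWOE of $F$ with $v_i\in F$ and $v_j\notin F$, and argue toward the conclusion that $F\cup\{e_{min}\}$ is also a fragment of some MST.

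Next I would split into two cases. In the easy case, $e_{min}\in \mathcal{T}^*$, and then the subtree $F\cup\{e_{min}\}$ is by definition a connected sub-graph of $\mathcal{T}^*$, hence a fragment. In the interesting case, $e_{min}\notin \mathcal{T}^*$: adding $e_{min}$ to $\mathcal{T}^*$ creates exactly one cycle $C$, because $\mathcal{T}^*$ is a spanning tree. Since the cycle $C$ begins in $F$, exits through $e_{min}$ to $V\setminus F$, and must eventually return to $F$ to close, a simple parity argument on the cut $(F,V\setminus F)$ shows that $C$ contains at least one other edge $e'=(v_k,v_\ell)$ with $v_k\in F$, $v_\ell\notin F$ — that is, another outgoing edge of $F$.

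Then I would invoke the MWOE property: by definition of $e_{min}$, we have $w'(e_{min}) \leq w'(e')$ (using the weights from the paper's framework, although the argument is weight-agnostic as long as a consistent weight ordering is used). Swapping, i.e., forming $\mathcal{T}' = (\mathcal{T}^*\setminus\{e'\})\cup\{e_{min}\}$, yields another spanning tree whose total weight is no greater than that of $\mathcal{T}^*$. Because $\mathcal{T}^*$ is a minimum spanning tree, $\mathcal{T}'$ must also be a minimum spanning tree. By construction $\mathcal{T}'$ contains all edges of $F$ together with $e_{min}$, so $F\cup\{e_{min}\}$ is a connected subtree of $\mathcal{T}'$ and therefore a fragment of the MST, which is what we wanted to show.

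The main obstacle, and the only step that requires real care, is the parity/cut argument producing the second outgoing edge $e'$ on the cycle $C$. It is intuitive but needs to be stated explicitly: traversing $C$ and counting the number of times it crosses the cut $(F, V\setminus F)$ must yield an even number, so since $e_{min}$ contributes one crossing, at least one more crossing edge exists on $C$, and such an edge is by definition an outgoing edge of $F$. Once this is established, the exchange step and the appeal to minimality are routine.
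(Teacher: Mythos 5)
Your proof is correct and is precisely the standard cut-property/exchange argument; the paper gives no proof of its own for this lemma, deferring to Gallager et al.\ and Peleg, and the argument in those references is essentially the one you wrote (case split on whether $e_{min}$ lies in the fixed MST, the even-crossing observation producing a second outgoing edge $e'$ on the cycle, and the weight-nonincreasing swap). One small point worth keeping in mind: under the distinct-weight assumption the paper later relies on (``MST is unique for a graph with unique edge weights''), the swap gives a strict improvement unless $e_{min}\in\mathcal{T}^*$, so the second case collapses into a contradiction; your version, which only concludes membership in \emph{some} MST, is the correct and slightly more general handling when ties are allowed.
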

The proof of Lemma \ref{lemma:frag} can be found in both \cite{gallager1983distributed} and \cite{peleg2000distributed}.

With Lemma \ref{lemma:frag}, the process of constructing MST is as follows \cite{gallager1983distributed}:
\begin{itemize}
\item Each node starts as a fragment by itself
\item Each fragment iteratively connects with MWOE fragment
\end{itemize}
This process will result in the MST of the given graph.

\begin{lemma} \label{lemma:subgroup}
By following the process above on $\mathcal{G}'$ in Definition \ref{def:ccst}, all nodes within the same sub-group will form a MST fragment before connecting to other sub-group.
\end{lemma}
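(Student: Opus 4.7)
The plan is to induct on the MST construction iterations described right before Lemma \ref{lemma:subgroup}, and show that any fragment whose nodes all belong to a single subgroup $\mathcal{S}_m$ admits a within-subgroup edge as its MWOE until the fragment grows to cover all of $\mathcal{V}_m$. By Lemma \ref{lemma:frag}, each such MWOE-based merge yields a valid fragment of the true MST of $\mathcal{G}'$, so closing this induction is exactly what the lemma asserts.

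First I would set up the induction. The base case is immediate: every singleton fragment trivially lies in exactly one subgroup. For the inductive step, I would take a fragment $F\subseteq \mathcal{V}_m$ with $F\neq \mathcal{V}_m$. The problem setup guarantees that the induced subgroup graph $\mathcal{G}_m=\mathcal{G}[\mathcal{V}_m]$ is connected at the current time step, so there must exist at least one edge of $\mathcal{G}'$ crossing from $F$ to $\mathcal{V}_m\setminus F$; that is, $F$ has at least one within-subgroup outgoing edge available.

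Second I would compare weights using Definition \ref{def:ccst}. Under the MST weights $\{-w'_{i,j}\}$ on $\mathcal{G}'$, any within-subgroup outgoing edge of $F$ has weight $-\lambda w_{i,j}$, while any cross-subgroup outgoing edge has weight $-w_{i',j'}$. The choice of $\lambda$ in Definition \ref{def:ccst} forces $\lambda w_{i,j}\gg w_{i',j'}$, equivalently $-\lambda w_{i,j}< -w_{i',j'}$, uniformly across all such edge pairs. Hence the MWOE of $F$ must be a within-subgroup edge, and by Lemma \ref{lemma:frag} the merged fragment still sits inside $\mathcal{S}_m$. Iterating until $F=\mathcal{V}_m$ shows that every subgroup forms one complete MST fragment before any cross-subgroup MWOE is ever selected.

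The step I expect to be the main obstacle is justifying the uniform dominance $-\lambda w_{i,j}<-w_{i',j'}$ when the raw weights $w_{i,j}=\dot h^c_{i,j}(\mathbf{x},\hat u_i,\hat u_j)+\gamma h^c_{i,j}(\mathbf{x})$ need not be sign-definite. My plan is to note that at each fixed time step the state $\mathbf{x}$, the nominal controllers $\{\hat u_i\}$, and the candidate edge set $\mathcal{E}$ are all finite, so the family $\{w_{i,j}\}_{(v_i,v_j)\in \mathcal{E}}$ is bounded, and a sufficiently large but finite $\lambda$ realizing the dominance condition in Definition \ref{def:ccst} can always be selected. This eliminates any tie between within- and cross-subgroup MWOE candidates and reduces the whole argument to the clean graph-theoretic induction above.
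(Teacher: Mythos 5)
Your proof is correct in substance and rests on exactly the same key fact as the paper's: the $\lambda$-inflation in Definition~\ref{def:ccst} forces every within-subgroup outgoing edge to beat every cross-subgroup outgoing edge in the MWOE comparison. The paper packages this as a one-step contradiction (assume some $v_i$ connects across subgroups first, exhibit a within-subgroup outgoing edge $(v_i,v_{i'})$, and derive $w'_{i,j}<w'_{i,i'}$ contradicting Eq.~(\ref{eq:neww})), whereas you run a direct induction on fragments contained in a single $\mathcal{V}_m$; these are the same argument in contrapositive versus direct form. Two of your additions are genuinely worthwhile: you make explicit that the existence of a within-subgroup outgoing edge for a proper subfragment $F\subsetneq\mathcal{V}_m$ requires connectivity of the induced graph $\mathcal{G}[\mathcal{V}_m]$ (the paper's proof silently posits such an edge $(v_i,v_{i'})$), and you correctly flag that the dominance $\lambda w_{i,j}\gg w_{i',j'}$ is not automatic when $w_{i,j}$ can be negative. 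However, your proposed repair of that last point does not work as stated: if some within-subgroup weight satisfies $w_{i,j}<0$, then $\lambda w_{i,j}\to-\infty$ as $\lambda$ grows, so no ``sufficiently large but finite $\lambda$'' realizes $\lambda w_{i,j}>w_{i',j'}$ uniformly; boundedness of the weight family does not rescue this. The honest resolution is that Definition~\ref{def:ccst} implicitly presupposes the within-subgroup weights are positive (or that the inflation is applied in a sign-aware manner), an assumption the paper's own proof also uses without comment; you should state it as a hypothesis rather than claim it follows from finiteness of $\mathcal{E}$.
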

\begin{proof}
We prove by contradiction. Suppose the node $v_i$ from sub-group graph $\mathcal{G}_i'$ connects with node $v_j$ first, which belongs to sub-group graph $\mathcal{G}_j'$, $i \neq j$. 
From the MST construction process we know that, at each iteration, the edge added is the minimum-weight outgoing edge of the connecting fragment. In this case, the weight $w_{i, j}$ of the edge between $v_i$ and $v_j$ is the minimum of all outgoing edges of $v_i$.
Let $v_{i'} \in \mathcal{G}_i'$ where there exists an outgoing edge between $v_i$ and  $v_{i'}$, then we know that the weight $w_{i, j}' < w_{i, i'}'$. This contradicts with the property of $\mathcal{G}'$ in Equation \ref{eq:neww}.
\end{proof}

\begin{theorem}\label{theorem:mccst}
Given the redefined Connectivity Constraint Spanning Tree (CCST) $\mathcal{T}_w^{c'}=(\mathcal{V},\mathcal{E}^T,\mathcal{W}^{T'})$ in Definition \ref{def:ccst} and denote minimum weight CCST as $\bar{\mathcal{T}}_w^{c'}=\argmin_{\mathcal{T}_w^{c'}\in \mathcal{T}} \sum_{(v_i,v_j)\in \mathcal{E}^{T}}-w'_{i,j}$, we have: $\bar{\mathcal{T}}_w^{c'} = \mathcal{G}^{c*}$ in Equation (\ref{eq:expobj}). Namely, the Minimum Spanning Tree $\bar{\mathcal{T}}_w^{c'}$ of $\mathcal{G}'$ is the optimal solution of $\mathcal{G}^{c*}$ in (\ref{eq:expobj}) and we call the graph $\bar{\mathcal{T}}_w^{c'}$ as Minimum Connectivity Constraint Spanning Tree (MCCST) of the original connected graph $\mathcal{G}$.
\end{theorem}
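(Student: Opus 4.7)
The plan is to prove $\bar{\mathcal{T}}_w^{c'} = \mathcal{G}^{c*}$ in two stages: first, that $\bar{\mathcal{T}}_w^{c'}$ is \emph{feasible} for the constrained problem in (\ref{eq:expobj}); and second, that among feasible spanning trees it attains the maximum of $\sum w_{i,j}$. The feasibility step follows directly from Lemma~\ref{lemma:subgroup}: the fragment-based MST construction on $\mathcal{G}'$ grows a single fragment spanning each subgroup $\mathcal{V}_m$ before any across-subgroup edge is added, so each induced subgraph $\bar{\mathcal{T}}_w^{c'}[\mathcal{V}_m]$ is a connected subtree, and the subgroup-connectivity constraint in (\ref{eq:expobj}) is met.

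For the optimality step I would exploit a structural decomposition of any feasible spanning tree. If $\mathcal{T}$ is a spanning tree of $\mathcal{G}$ with every $\mathcal{T}[\mathcal{V}_m]$ connected, then $\mathcal{T}[\mathcal{V}_m]$ is necessarily a spanning tree of the subgroup-induced subgraph $\mathcal{G}[\mathcal{V}_m]$ (it is both connected and acyclic), accounting for $\sum_m (|\mathcal{V}_m|-1) = N-M$ intra-subgroup edges; the remaining $M-1$ edges of $\mathcal{T}$ are across-subgroup and, after contracting each $\mathcal{V}_m$ to a super-node, form a spanning tree of the resulting $M$-node contracted multigraph. Conversely, any choice of per-subgroup spanning trees together with any spanning tree on the contracted multigraph reassembles into a feasible $\mathcal{T}$. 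The objective then splits as
\[
\sum_{(v_i,v_j)\in \mathcal{E}^T} w'_{i,j} \;=\; \lambda \sum_{\substack{(v_i,v_j)\in \mathcal{E}^T \\ \text{intra-subgroup}}} w_{i,j} \;+\; \sum_{\substack{(v_i,v_j)\in \mathcal{E}^T \\ \text{across-subgroup}}} w_{i,j},
\]
while $\sum w_{i,j}$ has the identical form without the leading $\lambda$. Because the intra- and across-subgroup choices are independent in the decomposition, both objectives are maximized by the same selection---a maximum-$w$ spanning tree of each $\mathcal{G}[\mathcal{V}_m]$ together with a maximum-$w$ spanning tree of the contracted multigraph---so, combined with feasibility, this will give $\bar{\mathcal{T}}_w^{c'} = \mathcal{G}^{c*}$.

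The step I expect to be the main obstacle is making the decoupling air-tight: I need to confirm that the $\lambda$-dominance condition in Definition~\ref{def:ccst} not only forces each subgroup fragment to be completed first (Lemma~\ref{lemma:subgroup}) but also causes the intra-subgroup portion of $\bar{\mathcal{T}}_w^{c'}$ to realize a maximum-$w$ spanning tree of each $\mathcal{G}[\mathcal{V}_m]$, and symmetrically causes the across-subgroup portion to realize a maximum-$w$ spanning tree of the contracted multigraph. Both facts trace back to the same observation---$\lambda$ is a uniform positive constant, so it preserves the relative order of $w_{i,j}$ among intra-subgroup edges and has no effect whatsoever on the ordering of across-subgroup edges---but I would spell this out carefully, since it is what allows the two independent maximizations in the decomposition to be chained into the claimed equality $\bar{\mathcal{T}}_w^{c'} = \mathcal{G}^{c*}$.
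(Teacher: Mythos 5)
Your proof is correct, but it reaches the conclusion by a genuinely different route than the paper. The paper's argument is algorithmic: it traces the distributed fragment-merging construction of the MST of $\mathcal{G}'$, using Lemma~\ref{lemma:subgroup} to argue that each subgroup completes its own minimum spanning tree first (a choice unaffected by the uniform positive factor $\lambda$), and then asserts that the subsequent minimum-weight-outgoing-edge merges across subgroups complete the optimum of (\ref{eq:expobj}); the optimality of the inter-subgroup edges is left largely implicit there. Your argument is instead a static, algorithm-independent one: you characterize the feasible set of (\ref{eq:expobj}) exactly---every feasible spanning tree decomposes uniquely into one spanning tree of each $\mathcal{G}[\mathcal{V}_m]$ (accounting for $N-M$ edges) plus a spanning tree of the contracted $M$-node multigraph (the remaining $M-1$ edges), and any such combination reassembles into a feasible tree---and then observe that both objectives separate over this product structure, differing only by the order-preserving factor $\lambda$ on the intra-subgroup block. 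Combined with feasibility of the unconstrained MST of $\mathcal{G}'$, which you obtain from Lemma~\ref{lemma:subgroup} exactly as the paper does, the chain ``unconstrained argmax of $\sum w'$ $=$ constrained argmax of $\sum w'$ $=$ constrained argmax of $\sum w$'' closes the proof. What your route buys is rigor precisely where the paper is thinnest: it makes explicit that the across-subgroup edges of any feasible tree form a spanning tree of the contracted multigraph and that this block is optimized independently of, and identically under, both weightings. What the paper's route buys is that the correctness of the distributed construction in Algorithm~\ref{alg:dis_mst} falls out of the same fragment-growing argument. The only point worth spelling out in your write-up (and which the paper also elides) is that with non-distinct weights the argmax is a set rather than a single tree, so the equality should be read as equality of optimizer sets or under the paper's uniqueness assumption on edge weights.
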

\begin{proof}
From Lemma \ref{lemma:subgroup}, edges \textit{between} sub-groups will be connected only when edges \textit{within} each sub-groups are connected.
By definition, the MST of graph $\mathcal{G}_i'$ within subgroup $\mathcal{S}_i$ is optimal with minimum total weight, which means 
\begin{equation} \footnotesize
\begin{split}
    \bar{{\mathcal{T}}_w^{c'}}(i)&=\argmin_{\mathcal{T}_w^{c'}(i)\in \mathcal{T}(i)} \sum_{(v_i,v_j)\in \mathcal{E}^{T(i)}}-w'_{i,j}\\
    &=\argmin_{\mathcal{T}_w^{c'}(i)\in \mathcal{T}(i)} \lambda\cdot\sum_{(v_i,v_j)\in \mathcal{E}^{T(i)}}-w_{i,j}\\
    &=\argmin_{\mathcal{T}_w^{c'}(i)\in \mathcal{T}(i)} \sum_{(v_i,v_j)\in \mathcal{E}^{T(i)}}-w_{i,j}
\end{split}
\end{equation}
The equality holds since $\lambda > 0$. Then we consider $v_i$ and $v_j$ in different subgroups, i.e. $\mathcal{S}(v_i) \neq \mathcal{S}(v_j)$, while $(v_i, v_j)$ is the edge in spanning tree edges $\mathcal{E}^{T(i)}$ connecting two subgroups. Then for the next step, connecting the minimum-weighted outgoing edge between different sub-groups, yields
\begin{equation} \footnotesize
\begin{split}
    \bar{{\mathcal{T}}_w^{c'}}
    &=\argmin_{\mathcal{T}_w^{c'}\in \mathcal{T}} \sum_{(v_i,v_j)\in \mathcal{E}^{T(i)}}- w'_{i,j}, \quad \mathcal{S}(v_i) \neq \mathcal{S}(v_j)\\
    &=\argmin_{\mathcal{T}_w^{c'}\in \mathcal{T}} \sum_{(v_i,v_j)\in \mathcal{E}^{T}}- w_{i,j}
\end{split}
\end{equation}
With the same form as in (\ref{eq:expobj}), this concludes the proof.
\end{proof}

In this way, we relax the constrained MST optimization problem in (\ref{eq:expobj}) into unconstrained MST problem with the same optimality guarantee. The connectivity constraints from the obtained MCCST $\bar{\mathcal{T}}_w^{c'}$ are thus minimally violated by the current task-related controllers, implying the least restriction due to global and subgroup connectivity requirements. Such MCCST $\bar{\mathcal{T}}_w^{c'}$ therefore specifies the optimal connectivity graph $\mathcal{G}^{c*}\subseteq \mathcal{G}$ to enforce for behavior mixing in (\ref{eq:rawconn}). Next, we will present a distributed method for computing MCCST.

\subsection{Construction of Distributed Minimum Connectivity Constraint Spanning Tree (MCCST)}
Here we propose a distributed construction of MCCST of $\mathcal{G}$. 
For our problem setting, the topology and weights could change over time, thus a time-optimal real-time algorithm is needed. We develop our algorithm based on the work from \cite{gallager1983distributed,  peleg2000distributed, pandurangan2017time}, but reduce the computation time while sacrificing message optimality. Different from most of the network algorithms such as \cite{pandurangan2017time, peleg2000distributed}, our algorithm does not require synchronization, which also reduces the total time. 
Note that MST is unique for a graph with unique edge weights. Therefore the result is the same from centralized and decentralized construction.

A detailed description of the algorithm is as follows:
\subsubsection{Overview}

Given a graph $\mathcal{G}'=\{\mathcal{V}, \mathcal{E}\}$ with weights defined in Definition \ref{def:ccst} where $|\mathcal{V}|=N$ is the number of robots, the initial state of the system is a singleton graph where each vertex is an individual isolated node without any outgoing edge, and each node is given a distinct id. This gives $N$ fragments and each consists of one vertex. Then each fragment finds the \textit{minimum-weight outgoing edge (MWOE)} and connect with neighboring fragments. Iteratively, the forest of fragments will join as a spanning tree connecting all vertices of the graph, resulting as the MCCST.
\begin{algorithm}[t]\footnotesize
    \caption{Distributed MCCST Construction}
    \label{alg:dis_mst}
    \begin{algorithmic}[1]
    \small
    \Input{$a$: adjacency edge weight list of the original weighted graph}
    \Output{edge list of MCCST}
    \Function{ConstructDistributedMCCST}{$A$}
    \State $A$ $\gets$ empty adjacency matrix
    \State $A$ $\gets$ updated from input $a$
    \While {$msg$ $\gets$ getNewMessage($msg\_pool$)}
    \If {not initialized}
    \State $A$ $\gets$ initialRound($msg$, $A$)
    \Else
    \State $A$ $\gets$ processRound($msg$, $A$)
    \EndIf
    \If{isConnected($A$)}
    \State \Return{getEdgeList($A$)}
    \EndIf
    \EndWhile
    \If{isEmpty($msg\_pool$)}
    \State resetRound()
    \EndIf
    \EndFunction
    \end{algorithmic}
\end{algorithm}
\setlength{\textfloatsep}{0pt}

As shown in Algorithm \ref{alg:dis_mst}, each robot takes an input of neighboring edge weights and connectivity information, then outputs the computed MCCST edge list. The incoming message is processed according to whether the node is being initialized or not. The process will reset when there is no new message in the message pool, which implies all the fragments finish updating within themselves and new MWOE need to be connected and a new round begins.
%\vspace{-0.2cm}
\subsubsection{Initial Round}
\begin{algorithm}[h]\footnotesize
\caption{Initial Round of MST Construction}
\label{alg:init_mst}
    \begin{algorithmic}[1]
    \Input{$msg$: incoming messages, $A$: current adjacency matrix}
    \Output{$A$: updated adjacency matrix}
    \Function{initialRound}{$msg$, $A$}
    \State connect with neighbor with $MWOE$
    \State $leader\_id$ $\gets$ min($self\_id$, $neighbor\_id$)
    \State $A$ $\gets$ update with $msg$
    \If{ no new information in $msg$}
    \State finish initial round
    \EndIf
    \State send $init$ message to $msg\_pool$
    \State \Return{$A$}
    \EndFunction
    \end{algorithmic}
\end{algorithm}
%\vspace{-0.2cm}
In the initial round, as shown in Algorithm \ref{alg:init_mst}, each fragment initially only contains one vertex. Each node directly connect with the neighbor with MWOE. However, to keep information within a fragment consistent and avoid additional computation, the node with the smallest id is selected as fragment leader. Information keeps updating within the fragment until every node has the same adjacency matrix of its fragment tree.

\begin{figure*}[!htbp]
%\captionsetup{skip=0pt}
  \centering
  \begin{subfigure}{0.3\textwidth}%32
\includegraphics[width=\textwidth]{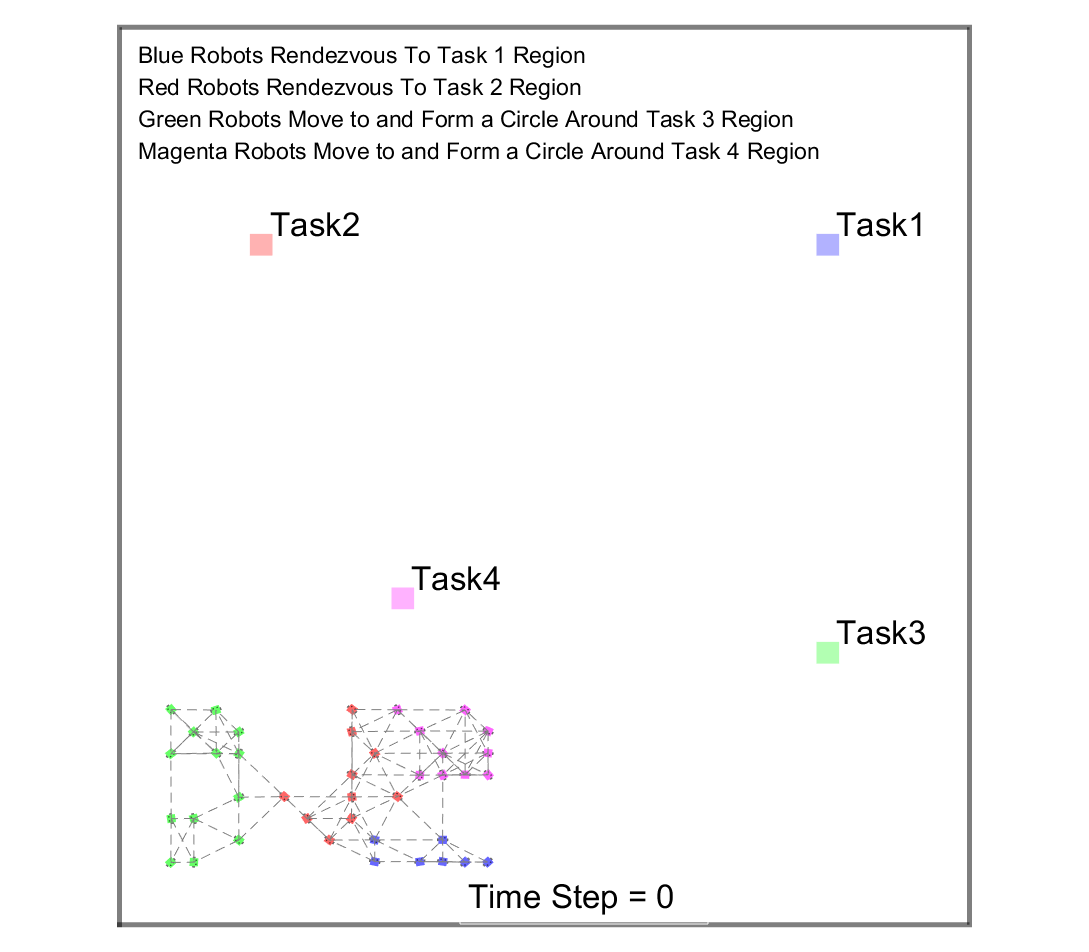}
    \caption{Time Step $=0$}
    \label{fig:init}
  \end{subfigure}
  \begin{subfigure}{0.3\textwidth}
\includegraphics[width=\textwidth]{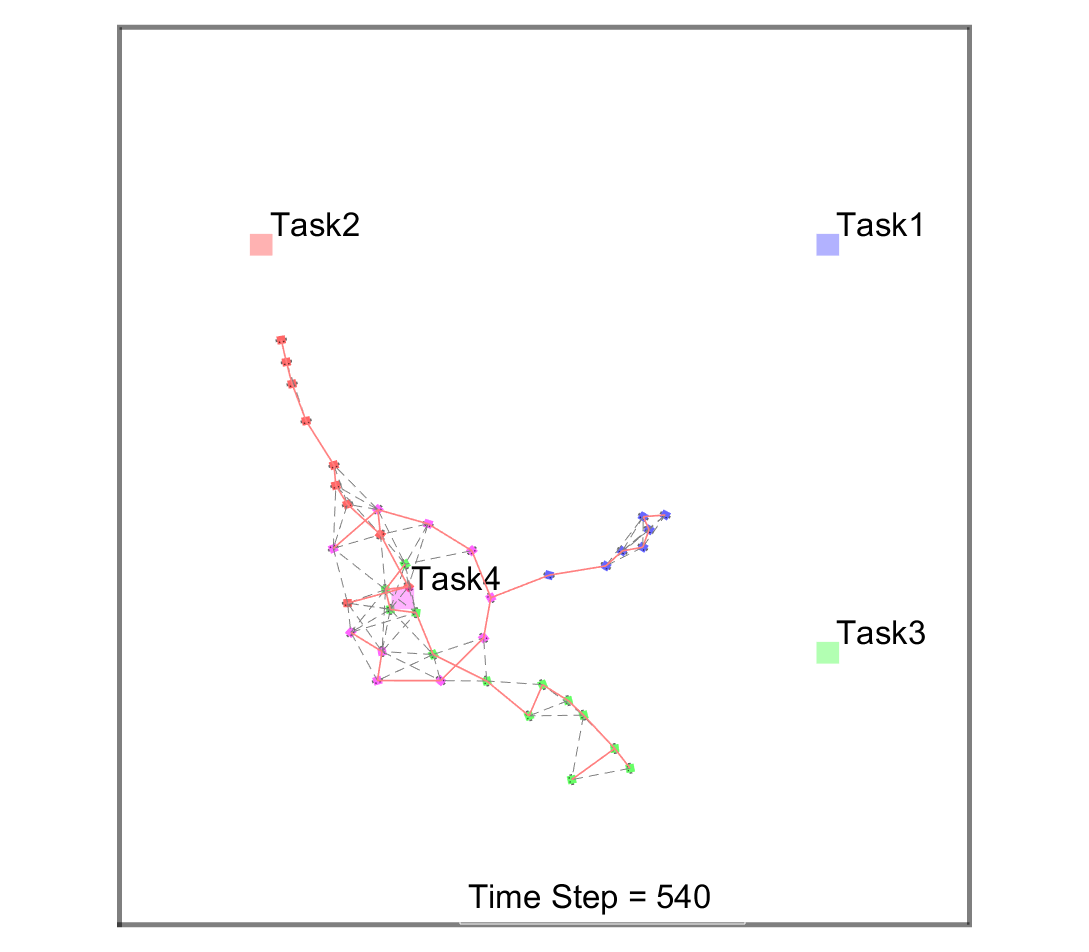}
    \caption{Time Step $=540$ (MCCST)}
    \label{fig:middle}
  \end{subfigure}
  \begin{subfigure}{0.3\textwidth}
\includegraphics[width=\textwidth]{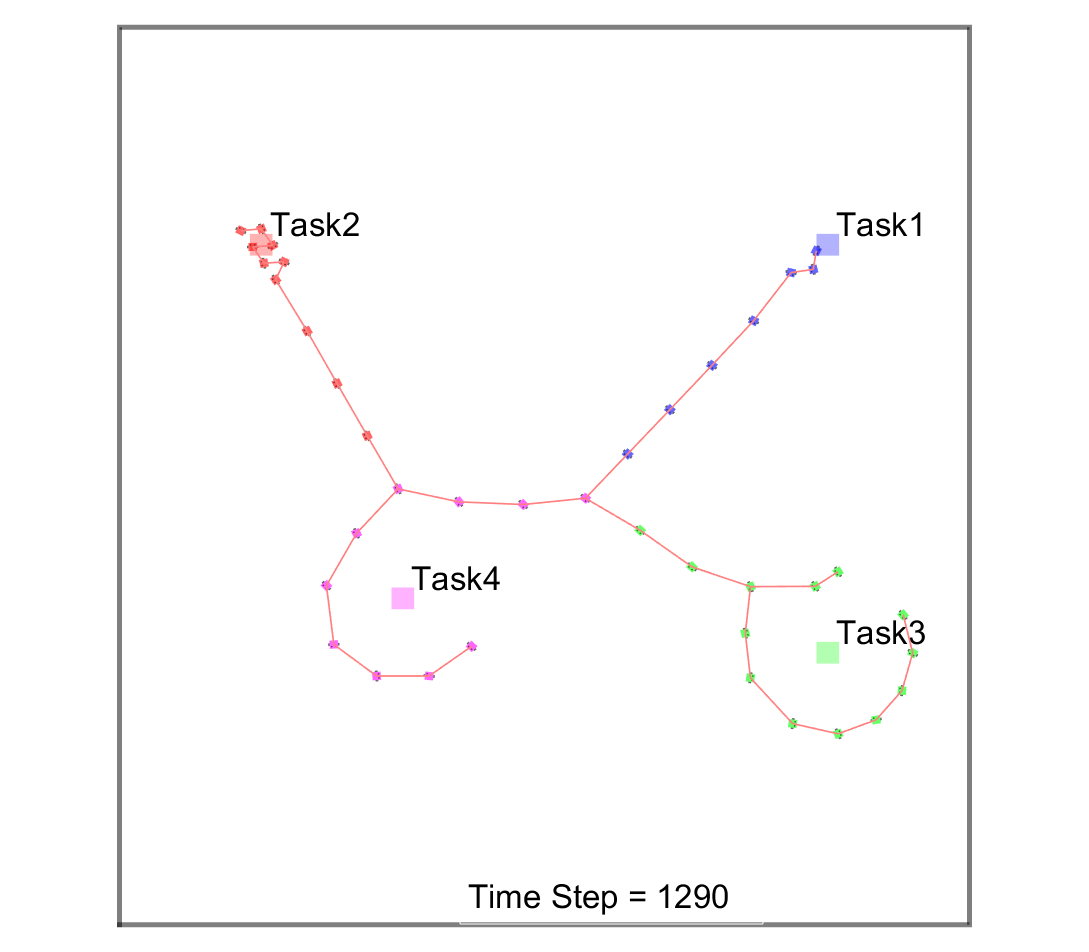}
    \caption{Time Step $=1290$ (MCCST, Converged)}
    \label{fig:finalours}
  \end{subfigure}
    \begin{subfigure}{0.3\textwidth}
\includegraphics[width=\textwidth]{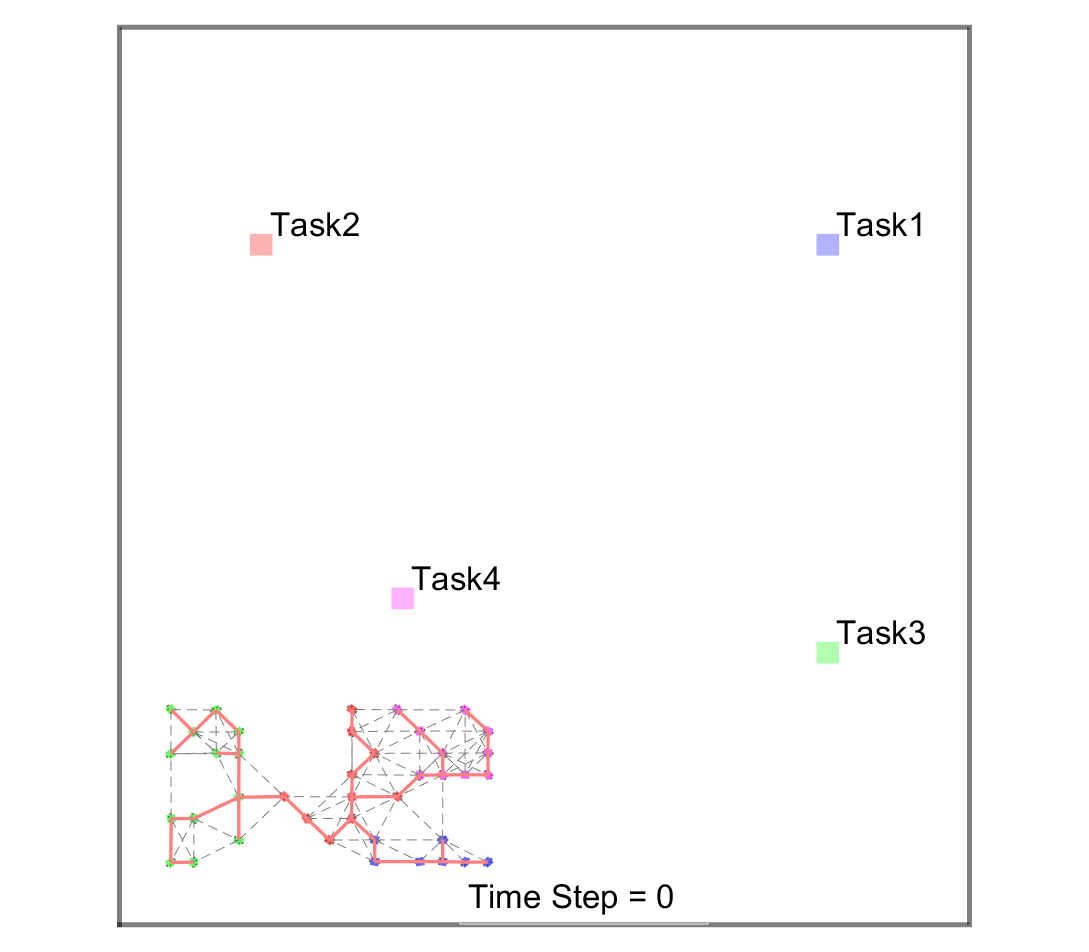}
    \caption{Time Step $=0$}
    \label{fig:initspan}
  \end{subfigure}
  \begin{subfigure}{0.3\textwidth}
\includegraphics[width=\textwidth]{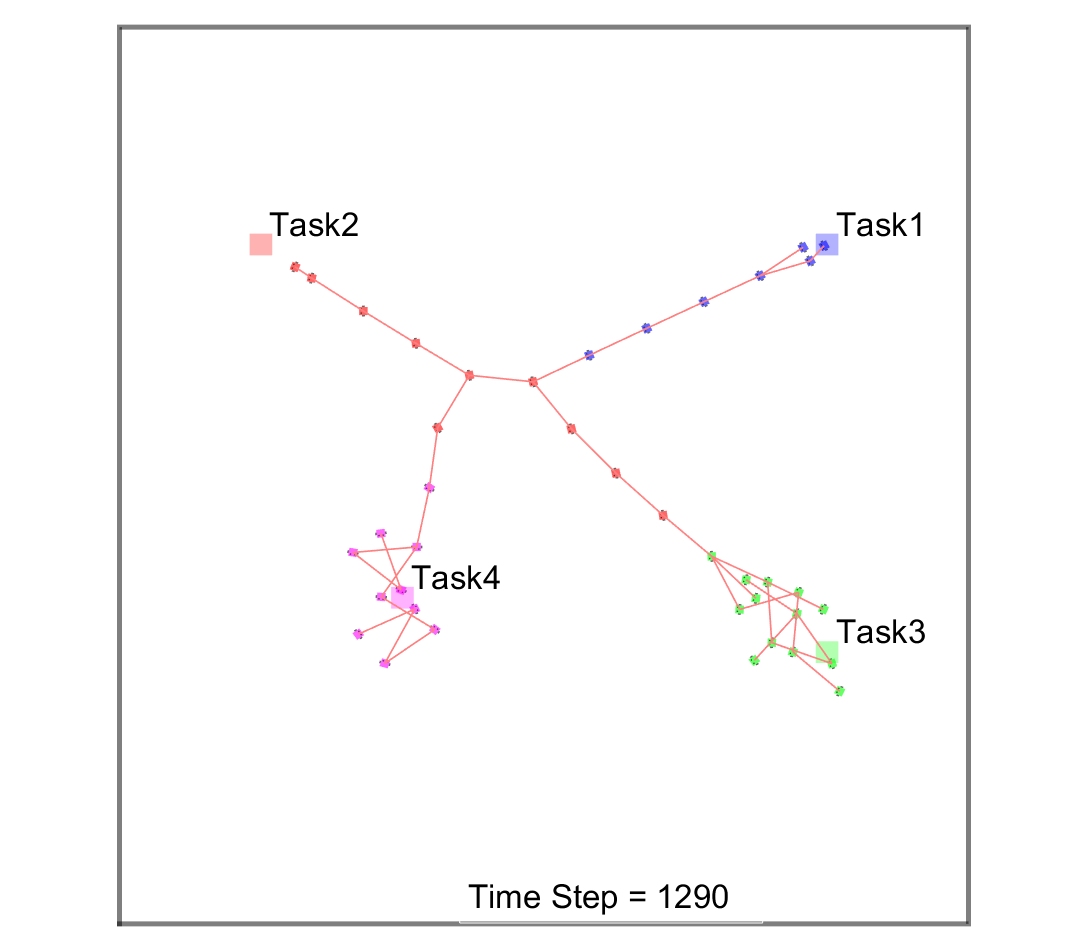}
    \caption{Time Step $=1290$ (Fixed Initial MST, Converged)}
    \label{fig:finalinitmst}
  \end{subfigure}
  \begin{subfigure}{0.3\textwidth}
\includegraphics[width=\textwidth]{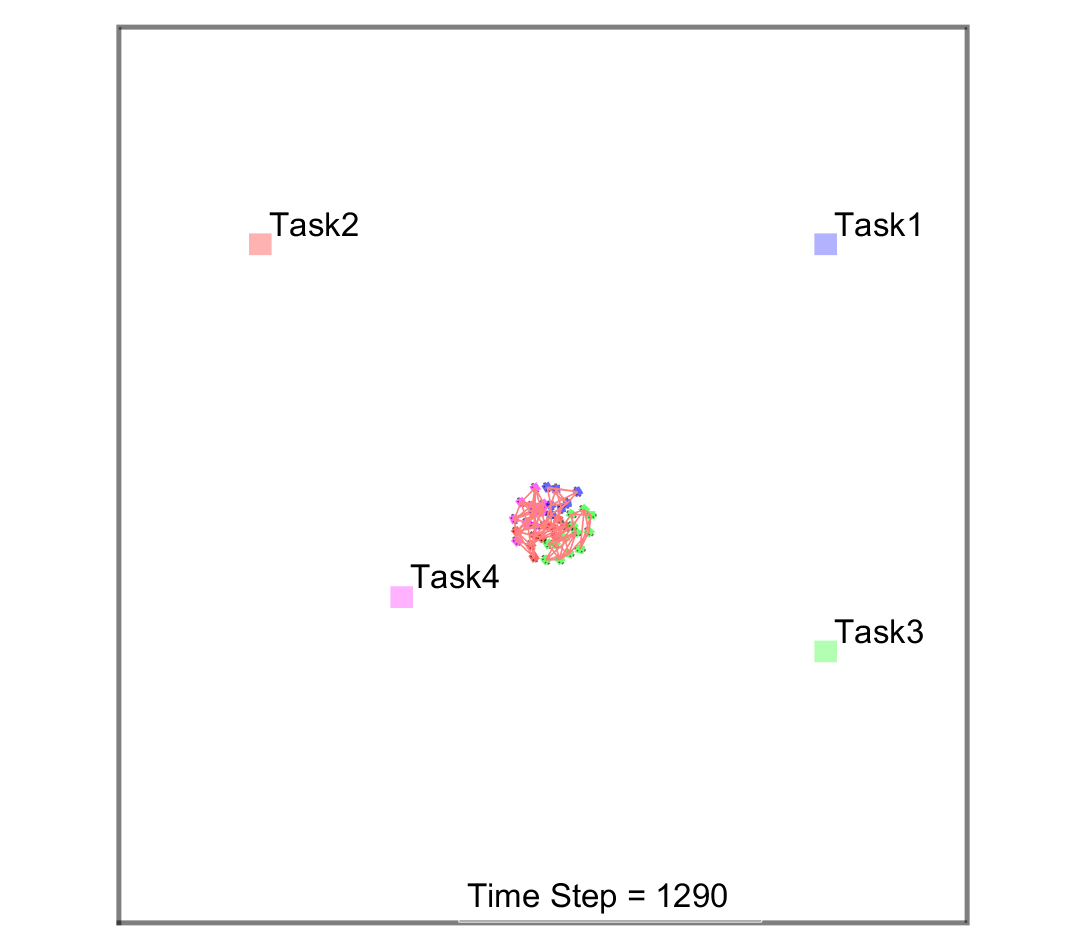}
    \caption{Time Step $=1290$ (Fixed Initial Connectivity, Converged)}
    \label{fig:finalinitconn}
  \end{subfigure}
\caption{Simulation example of 40 robots tasked to four different places with behaviour mixing: blue robots and red robots rendezvous to regions of blue task 1 and red task 2 respectively, while green robots and magenta robots move to region of green task 3 and magenta task 4 and form a circle around the regions. Grey dashed lines in (a),(b),(d) denote current connectivity edges and red lines in (a)-(f) denote current active connectivity graph invoking pair-wise connectivity constraints. Compared to fixed inter-robot connectivity constraints from initial MST (e) and initial connectivity graph (f), our proposed MCCST approach (c) enables minimally perturbed task performance due to invoked minimum connectivity constraints on the robots.
}
  \label{fig:sim1}
  \vspace{-12pt}
\end{figure*}

% \begin{figure*}[!htbp]
\begin{figure*}
\captionsetup{skip=0pt}
  \centering
  \begin{subfigure}{0.24\textwidth}%24
\includegraphics[width=\textwidth]{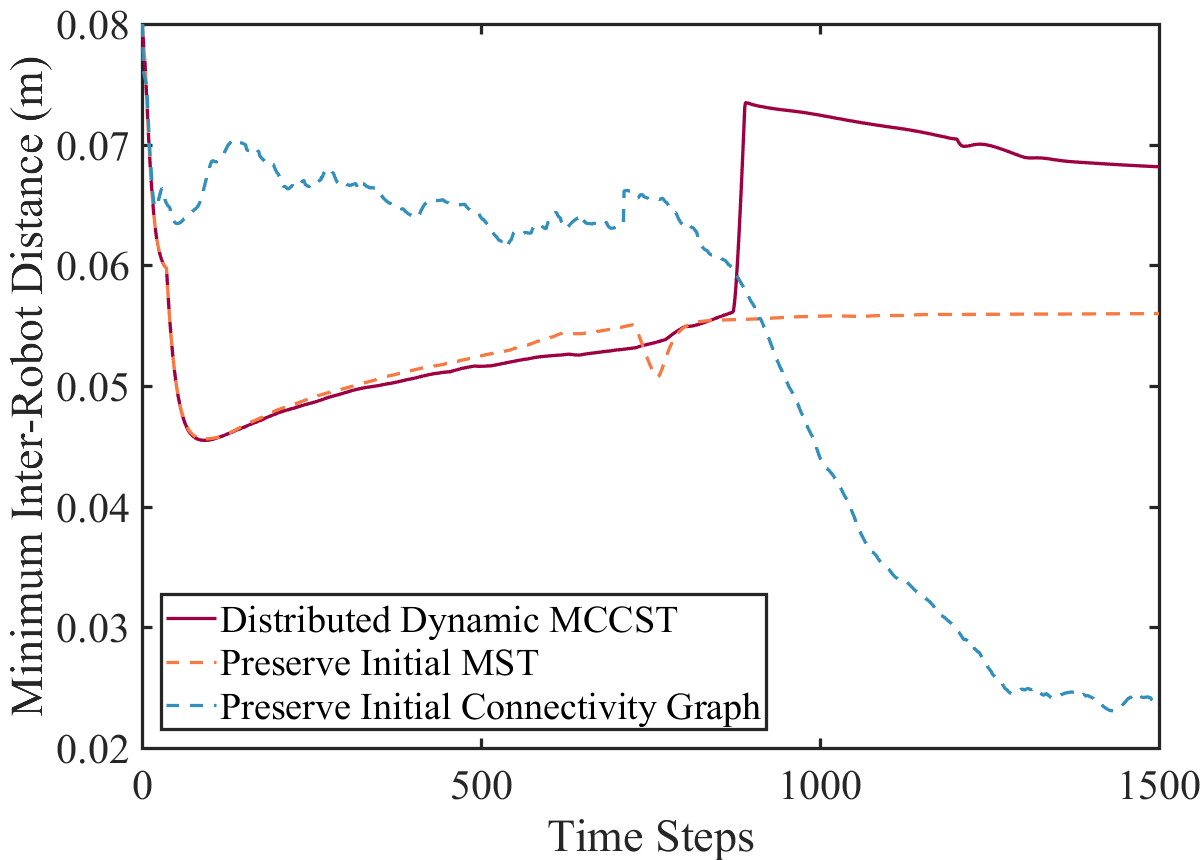}
    \caption{Minimum inter-robot distance}
    \label{fig1:interdistance}
  \end{subfigure}
  \begin{subfigure}{0.24\textwidth}
\includegraphics[width=\textwidth]{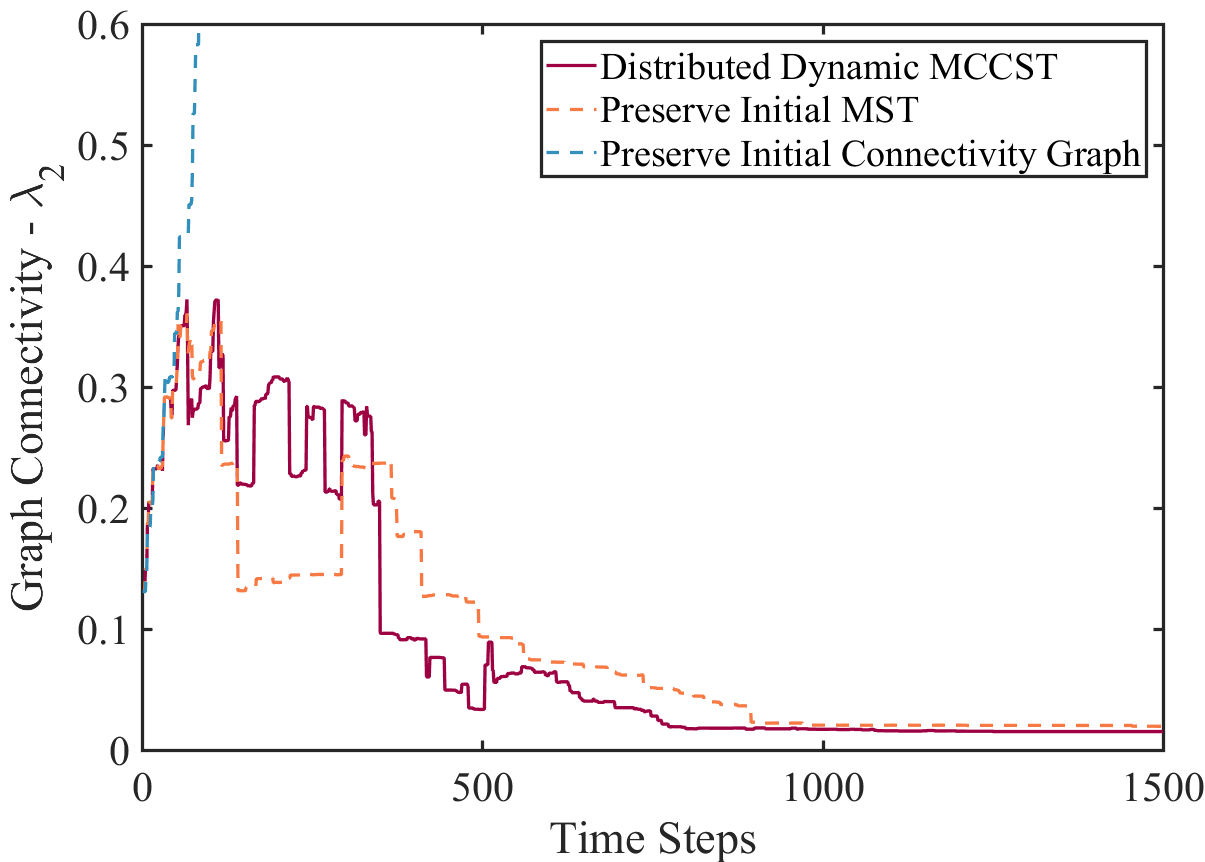}
    \caption{Algebraic connectivity}
    \label{fig1:conn}
  \end{subfigure}
  \begin{subfigure}{0.24\textwidth}
\includegraphics[width=\textwidth]{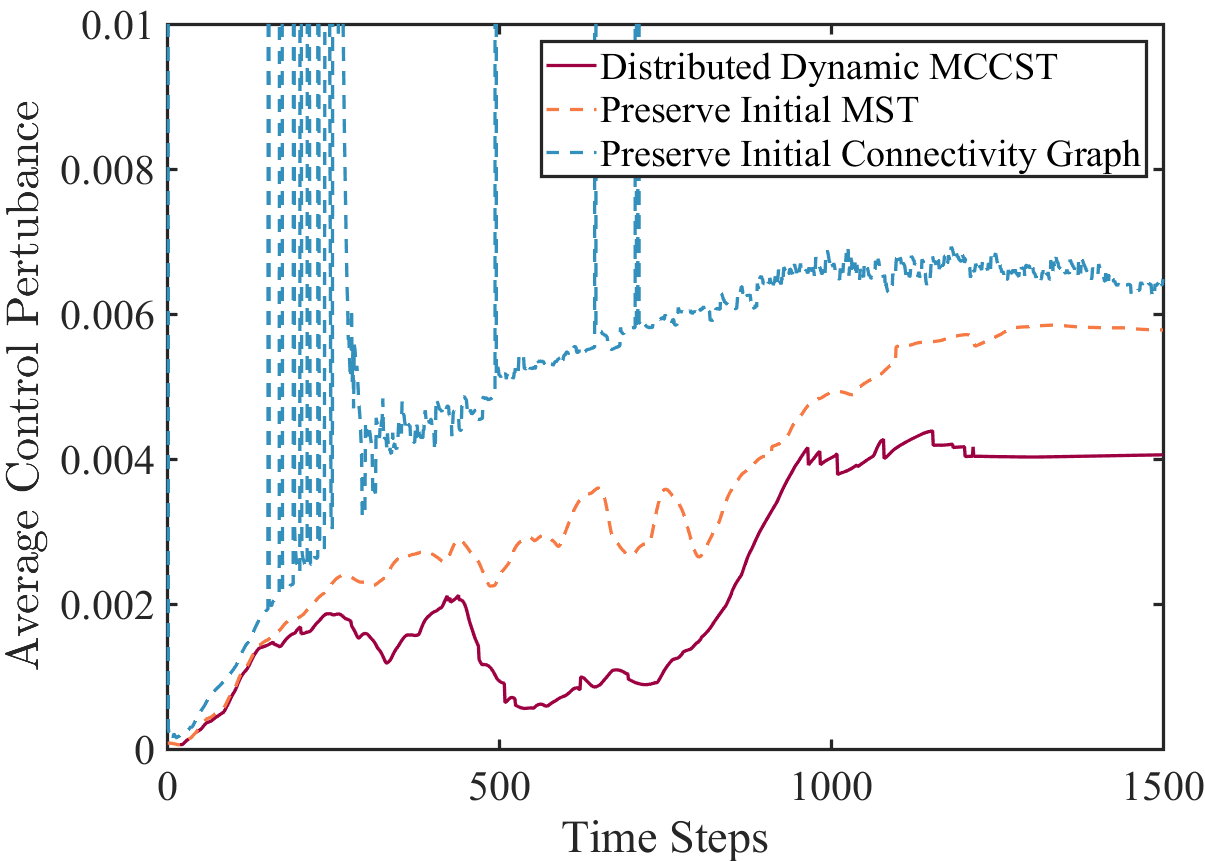}
    \caption{Average control perturbation}
    \label{fig1:ctrl}
  \end{subfigure}
  \begin{subfigure}{0.24\textwidth}
\includegraphics[width=\textwidth]{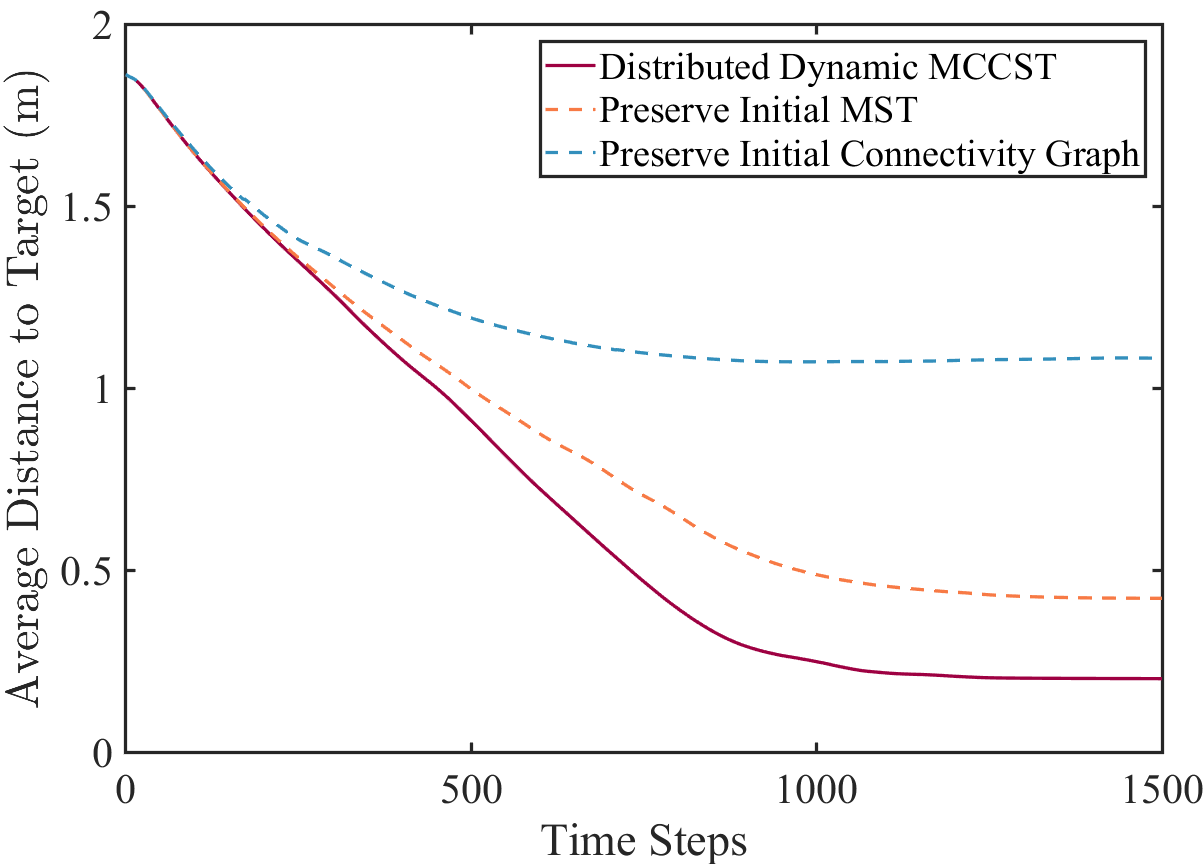}
    \caption{Average distance to target region}
    \label{fig1:perf}
  \end{subfigure}
\caption{Performance comparison of simulation example in Figure~\ref{fig:sim1} w.r.t. different metrics: (a) Minimum inter-robot distance (safety distance is 0.02m), (b) Algebraic connectivity evaluated by second smallest eigenvalue of mutli-robot laplacian matrix. Positive meaning connectivity ensured, (c) Control perturbation computed by $\frac{1}{N}\sum_{i=1}^N\norm{u_i^*-\Hat{u}_i}^2$, (d) Average distance between robots to tasked region (the smaller the better).
}
  \label{fig:sim_setup}
  \vspace{-10pt}
\end{figure*}

\subsubsection{Processing Round}

% \vspace*{-.3cm}
\begin{algorithm}\footnotesize
\caption{Processing Round of MCCST Construction}
\label{alg:proc_mst}
    \begin{algorithmic}[1]
    \small
    \Input{$msg$: incoming messages, $A$: current adjacency matrix}
    \Output{$A$: updated adjacency matrix}
    \Function{processRound}{$msg$, $A$}
    \If{$leader\_id$ is $self\_id$}
    \State $MWOE\_cache\_list$ $\gets$ wait($fragment\_node$)
    \State $MWOE$ $\gets$ min($MWOE\_cache\_list$)
    \If{all $fragment\_node$ reported}
    \State inform the one with $MWOE$ to connect
    \EndIf
    \EndIf
    \If{not reported to leader}
        \If{no MWOE info}
        \State $msg\_pool$[$MWOE$] $\gets$ $get\_info$ message 
        \EndIf
        \If{get information from MWOE neighbor}
        \State $msg\_pool$[$leader\_id$] $\gets$ $report$ message 
        \EndIf
    \EndIf
    \State $A$ $\gets$ update with $msg$
    \State \Return{$A$}
    \EndFunction
    \end{algorithmic}
\end{algorithm}

At each processing round, the fragment leader will determine the minimum-weight outgoing edge (MWOE) in its fragment after receiving all MWOE information from each fragment node (including itself). Since each node in the fragment only has the local knowledge within its own fragment, it will ask the MWOE neighbor for their fragment information, i.e. adjacency matrix, leader id. Whenever a node receives a request to give information, it will reply accordingly. Once each node receives information from MWOE neighbor, it will report to the fragment leader. All connect requests will be accepted and this, by lemma \ref{lemma:frag}, always yields a fragment. When a new connection is made, the two fragments will combine their information and update all the nodes within the fragment with the new information. Iteratively, the construction process will end when every node receives the same updated adjacency matrix representing the MST of the graph. Since only the leader of each fragment updates the adjacency matrix within the fragment, eventually when the algorithm terminates, there will be only one fragment, i.e. the MST, with one leader, marking the convergence of the distributed algorithm.
The convergence speed of our distributed MCCST algorithm is dependent on the topology of the original communication graph and edge weights, ranging from one iteration to $O(\log N)$ iterations with a worst case time complexity of $O(N\log N)$.

Once the final MCCST is obtained as the optimal connectivity graph $\mathcal{G}^{c}=\mathcal{G}^{c*}$ in (\ref{eq:rawconst}), we can specify the safety and connectivity barrier certificates (\ref{eq:safebarrier}) and (\ref{eq:connbarrier}) to invoke a set of linear constraints. Thus the original quadratic programming (QP) problem in (\ref{eq:rawobj}) could be efficiently solved to get optimal revised robot controllers satisfying safety and global and subgroup connectivity constraints for behavior mixing.

\begin{figure*}[!htbp]
\centering
  \begin{subfigure}[t]{0.24\textwidth}%24
    \includegraphics[width=\textwidth]{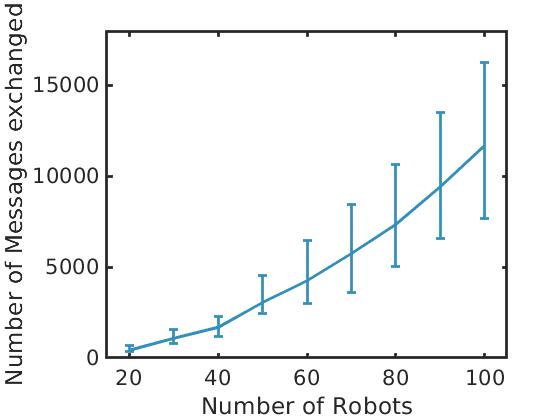}
    \caption{Number of messages exchanged}
    \label{fig:avg_msg_mst}
  \end{subfigure}\hfill
  \begin{subfigure}[t]{0.24\textwidth}
    \includegraphics[width=\textwidth]{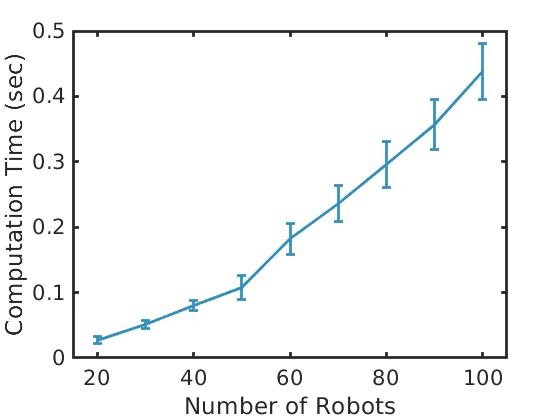}
    \caption{Computation Time}
    \label{fig:avg_time_mst}
  \end{subfigure}\hfill
  \begin{subfigure}[t]{0.24\textwidth}
    \includegraphics[width=\textwidth]{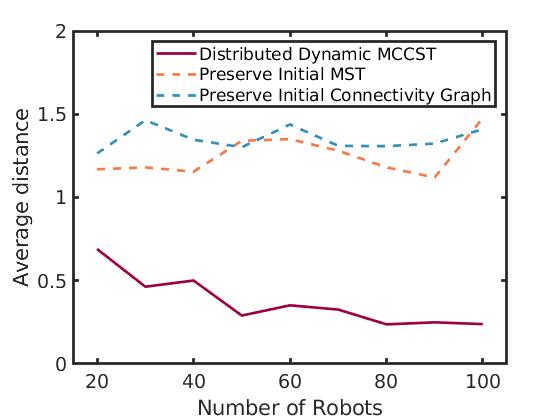}
    \caption{Average Distance to targets}
    \label{fig:avg_distance}
  \end{subfigure}\hfill
  \begin{subfigure}[t]{0.24\textwidth}
    \includegraphics[width=\textwidth]{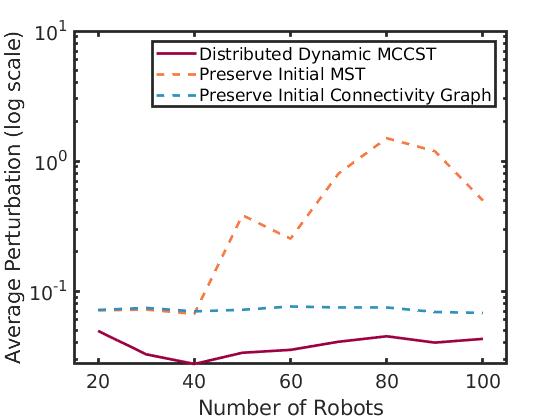}
    \caption{Average Control Perturbation}
    \label{fig:perturbation}
    \vspace{-20pt}
  \end{subfigure}
\caption{Quantitative results summary. (a)-(b) are results from our proposed Distributed MCCST approach. (c)-(d) are comparison results with ours and the other two approaches with static connectivity graph but the same controller (\ref{eq:rawobj}). (a) Number of messages exchanged during the distribute MCCST construction. The error bar shows the maximum and minimum number of messages exchanged. (b) Computation time of constructing the distributed MCCST. The error bar shows the standard deviation. (c) Average distance from robot to target location after converged. (d) Average control perturbation.}
  \label{fig:result_robot_num}
  \vspace{-10pt}
\end{figure*}

\section{Results}

\subsection{Simulation Example}

The first set of experiments are performed on a team of $N=40$ mobile robots with unicycle dynamics as shown in Figure~\ref{fig:sim1}. 
The robot team is divided into $M=4$ subgroups with different colors and is tasked with 4 parallel behaviors. In the figures, robots in blue subgroup 1 and red subgroup 2 execute biased rendezvous behaviors towards the blue task site 1 and red task site 2 respectively, while robots in green subgroup 3 and magenta subgroup 4 perform circle formation behaviors around the green task site 3 and magenta task site 4 respectively. 
For our MCCST method, we apply the minimally revised controllers from (\ref{eq:rawobj}) with single-integrator dynamics to the robots with unicycle dynamics using kinematics mapping from \cite{wang2017safety}.
As shown in Figure~\ref{fig:sim1}a-c, our distributed MCCST approach is able to generate real-time minimum connectivity graph (red edges) from the present connectivity graph (grey edges) so that the invoked connectivity constraints are minimally restrictive to the original behavior controllers. Most of the target behavior configurations have been accomplished as shown in Figure~\ref{fig:sim1}c. The communication relays connecting different subgroups are implicitly formed to provide greater flexibility for the rest of the robots without the need of explicit robots roles assignment as done in \cite{banfi2018optimal, majcherczyk2018decentralized}. This is because our algorithm enforces provably minimum connectivity graph that is least restrictive to the robots. 

In comparison, we present converged results of other two methods with static connectivity graph in Figure~\ref{fig:finalinitmst} and Figure~\ref{fig:finalinitconn} respectively: i) always preserving communication edges in the initial MST (red) depicted in Figure~\ref{fig:sim1}d, and ii) always preserving edges in initial connectivity graph (grey) in Figure~\ref{fig:init} as done in \cite{zavlanos2007flocking}. Since the invoked connectivity graph is fixed as the robots move, they can hardly achieve circle formation (Figure~\ref{fig:sim1}e) or could fall into deadlock (Figure~\ref{fig:sim1}f). Numerical results are provided in Figure~\ref{fig:sim_setup} showing our method ensures safety and connectivity, while having minimal control perturbation due to connectivity and maximum task performance (very close to designated target area as shown from Figure~\ref{fig:sim_setup}d). Note that in Figure~\ref{fig:finalinitmst} the provided comparison method of preserving initial MST from our MCCST without updating in real-time is already better than other barrier certificate based connectivity controllers \cite{li2018formally, wang2017safety} that impose predefined fixed connectivity graph not necessarily as optimal for the tasks.

\subsection{Quantitative Results}

For validating the computation efficiency and scalabiltiy of our algorithm, we run experiments with up to 100 robots and 4 parallel behaviors (four robots subgroups simultaneously rendezvous to four different places with safety and connectivity constraints).
For Figure \ref{fig:avg_msg_mst} and \ref{fig:avg_time_mst}, the experiment is done by computing the distributed MCCST 1500 to 3000 times, depending on the iterations for the system to converge, which varies with the number of robots and graph topology. The complexity of the worst case for both message and time is $O(N \log N)$. However, the average case, as shown in the figure, is better than $O(N \log N)$. Figure \ref{fig:avg_time_mst} shows the time duration for computing the distributed MCCST, which shows that computing distributed MCCST could be done in real time with large number of robots.

The average distance to target region and perturbation after convergence is calculated from 10 runs for each batch of robots with up to 100 robots. In Figure~\ref{fig:avg_distance}, the average distance to target with MCCST is significantly smaller (closer to target region) than with static connectivity graph. The distance also decreases as the number of robots increases, since only a limited number of robots are needed to maintain connectivity, which enables more robots to rendezvous to the target locations. However, for the other two methods with imposed static connectivity topology, the average distance increases with the number of robots. Figure~\ref{fig:perturbation} shows the result of average perturbation. Our method gives much smaller perturbation on average. Note that the result from preserving initial MST gives much worse result than the other two, because the initial MST edges could give huge deviation from the optimal control outputs as behaviors progressed, while the full connectivity graph gives larger number of constrain edges to keep, so that some are canceled out with each other. Nevertheless, our distributed MCCST method always computes the minimum connectivity constraints, thus outperforming the other two methods significantly.  

\section{Conclusion} 
\label{sec:conclusion}
In this paper, we developed the bilevel optimization based minimum connectivity maintenance framework for behavior mixing. We proposed a distributed Minimum Connectivity Constraint Spanning Tree (MCCST) algorithm to compute provably minimum global and subgroup connectivity constraints in real-time.
By formulating the invoked connectivity constraints and safety constraints using safety and connectivity barrier certificates, the robots controllers are minimally modified from the original controllers with dynamic and possibly discontinuous communication topology. 
Experimental results show that our method is scalable and computation efficient to large number of robots. 
Future work includes 
the incremental computation of MCCST to more efficiently handle the robots joining or leaving the team dynamically.

%% Use plainnat to work nicely with natbib. 

\bibliographystyle{IEEEtran}
\bibliography{ref}

\end{document}